\title{The Symmetry of a Simple Optimization Problem in Lasso Screening}
\name{Yun Wang and Peter J. Ramadge}
\address{Department of Electrical Engineering, Princeton University\\
Princeton, NJ 08544, USA}
\newcommand{\R}{\mathbb{R}} 
\newcommand{\by}{\times}
\newcommand{\norm}[2]{\lVert{#1}\rVert_{#2}}
\newcommand{\cN}{\mathcal{N}}
\newcommand{\vect}[1]{\mathbf{#1}}
\newcommand{\vzero}{\vect{0}}
\newcommand{\vq}{\vect{q}} 
\newcommand{\vb}{\vect{b}} 
\newcommand{\vc}{\vect{c}} 
\newcommand{\vd}{\vect{d}} 
\newcommand{\ve}{\vect{e}} 
\newcommand{\vu}{\vect{u}} 
\newcommand{\vv}{\vect{v}} 
\newcommand{\vw}{\vect{w}} 
\newcommand{\vx}{\vect{x}} 
\newcommand{\vy}{\vect{y}} 
\newcommand{\vz}{\vect{z}} 
\newcommand{\vn}{\vect{n}} 
\newcommand{\vth}{\vect{\boldsymbol{\theta}}} 
\newcommand{\vpsi}{\vect{\boldsymbol{\psi}}}  
\newcommand{\bmu}{\bar{\mu}}
\newcommand{\vwo}{\tilde{\vw}} 
\newcommand{\vto}{\tilde{\vth}} 
\newcommand{\FS}{\mathcal F} 
\newcommand{\RR}{\mathcal{R}} 
\newcommand{\DM}{{\color{black}D}} 
\newtheorem{proposition}{Proposition}
\newtheorem{theorem}{Theorem}
\newtheorem{lemma}{Lemma}
\newtheorem{corollary}{Corollary}
\newcommand{\dd}{n}
\newcommand{\nh}{m}
\newcommand{\simS}{\sim_{S_{\cN}} }
\newcommand{\facsp}{\R^{\dd}/S_{\cN}}
\begin{document}
%
\maketitle
\begin{abstract}
Recently dictionary screening has been proposed as an effective way to improve the computational efficiency of solving the lasso problem, which is one of the most commonly used method for learning sparse representations. To address today's ever increasing large dataset, effective screening relies on a tight region bound on the solution to the dual lasso. Typical region bounds are in the form of an intersection of a sphere and multiple half spaces. One way to tighten the region bound is using more half spaces, which however, adds to the overhead of solving the high dimensional optimization problem in lasso screening. This paper reveals the interesting property that the optimization problem only depends on the projection of features onto the subspace spanned by the normals of the half spaces. This property converts an optimization problem in high dimension to much lower dimension, and thus sheds light on reducing the computation overhead of lasso screening based on tighter region bounds.
\end{abstract}
\begin{keywords}
sparsity, lasso problem, dictionary screening, optimization
\end{keywords}
\section{Introduction}
\label{sec:introduction}
The least squares problem with l-1 regularization, widely known as the lasso problem \cite{RobTib1996},
\begin{equation}
	\label{eq:lasso}
	\min_{\vw\in\R^{p}} \qquad  \frac{1}{2}\norm{\vx-B\vw}{2}^2
		+ \lambda \|\vw\|_{1},
\end{equation}
remains one of the most used method for obtaining sparse representations. As a nonlinear encoding of signal $\vx$, the solution $\vwo$ proves effective in a variety of subsequent decision tasks, \cite{Wright2009Robust, YWang2013SRC, DeepSparseCoding}.

Despite efficient algorithms for solving  \eqref{eq:lasso} exists \cite{Lee2007Efficient}, scalability to large datasets remains a major problem. Dictionary screening for the lasso was proposed to address this computational issue \cite{Tibshirani2010Strong, Ghaoui2012, Xiang2011Learning_b, Xiang2012Fast, YWang2013a, FCSS2016, WangThesis2015, LSS, GramfortICML15}. Given a target vector $\vx$, dictionary screening identifies a subset of features $\vb_i$ with $\vwo_i=0$. These features can then be removed from the dictionary, and a smaller lasso problem is solved to obtain a solution of the original problem. This can significantly reduce the size of the dictionary that is loaded into memory (provided to the lasso solver), and make finding a lasso solution faster.

As a first step in existing screening methods, one need to bound the solution $\vto$ to the dual problem of \eqref{eq:lasso} within a compact region $\RR$, and then solve the following optimization problem
\begin{equation}
    \label{eq:opt_screening}
\mu(\vb) = \max_{\vth \in \RR}\vth^T\vb.
\end{equation}
If a feature $\vb_i$ satisfies $\mu(\vb_i)<1$ and $\mu(-\vb_i)<1$, then it follows $\vwo_i=0$.
Commonly used region $\RR$ is in the form of the intersection of a sphere and multiple half spaces, i.e., $\RR = \{\vth:(\vth-\vq)^T(\vth-\vq) \leq r^2, \vn_k^T\vth-c_k \leq 0, \quad k = 1,\dots,\nh\}$. For instance, closed form solutions for $\nh=1, 2$ are available \cite{Xiang2012Fast, YWang2013a}.

Today's ever increasing size of big data not only makes solving \eqref{eq:lasso} slower, but loading the entire data into memory can be problematic in the first place. This places a demand on improving the effectiveness of screening, which in turn relies on a tight region bound $\RR$ for $\vto$: studies suggest that with a tighter $\RR$, the screening algorithm can reject more features. One simple way to obtain a tighter $\RR$ is by imposing a larger $\nh$. Empirical studies have shown that increasing the number of hyperplane constraints improves the rejection rate. For instance, \cite{YWang2013a} shows that when $\nh$ moves from $1$ to $2$, the rejection percentage increases from $22\%$ to $40\%$ for MNIST \cite{LeCun1998The-MNIST} dataset and from $60\%$ to $80\%$ for YALEBXF \cite{Georghiades2002From} dataset for a target $\lambda/\lambda_{\max}=0.4$. It is likely that by further increasing $\nh$, the screening performance can be further boosted. Finding more half space constraints is not a problem. Borrowing similar ideas from previous works, one can find $\nh$ half spaces from the codeword constraints of the dual problem in a greedy fashion \cite{YWang2013a}, or from the solutions to the previous $\nh$ solved instances in a sequential screening scheme \cite{Ghaoui2012, YWang2013b}.

However, the problem for a larger $\nh$ is the potential computation cost. For $\nh>2$, a clean closed-form solution is unlikely.  Even for $\nh=2$, the closed-form solution is already complicated. So with a larger $\nh$, one might eventually resort to numerical solutions, and solving the optimization problem \eqref{eq:opt_screening} in high data dimension with a more complex region $\RR$ can add to the overhead, which might compromise the benefits of screening.

It is thus of interest to study the properties of \eqref{eq:opt_screening}, with the hope of simplifying solving the optimization problem \eqref{eq:opt_screening}. Analysis in this paper shows that the solution to \eqref{eq:opt_screening} is a function of the projection of the features $\vb$ onto the subspace that is spanned by the normals of the half spaces in $\RR$. This shreds light on reducing the optimization problem \eqref{eq:opt_screening} from dimension $n$ to $m$. This has very practical implications, considering the scale of this dimension reduction: $n$ usually ranges in scale from a few hundreds (MNIST \cite{LeCun1998The-MNIST}) to more than a hundred thousands (NYT dataset \cite{Asuncion2007UCI-Machine}), while current $m$ is less than 10.

\section{Core Problem}

We formalize our problem as follows. Let $\vq, \vth \in \R^{\dd}$, $r, c_k >0$ and $\vn_k\in \R^{\dd}$ with $\|\vn_k\|_2=1$, $k=1,\dots,\nh$.
For given $\vb\in \R^{\dd}$, we  consider the simple optimization problem:
\begin{align}\label{eq:P0}
\begin{split}
\mu(\vb) = \max_{\vth\in \R^{\dd}}
&\quad \vth^T \vb\\
\text{s.t.}
&\quad (\vth-\vq)^T(\vth-\vq)-r^2 \leq 0\\
&\quad \vn_k^T\vth-c_k \leq 0 \qquad k = 1,\dots,\nh
\end{split}
\end{align}
The vector $\vb\in \R^{\dd}$ specifies the linear objective function and the parameters $\vq,r$
and $\vn_k,c_k$, $k=1,\dots,\nh$,
specify a spherical bound and $\nh$ half space constraints $\vn_k^T\vth\leq c_k$ on the feasible
points of the problem, respectively.
Using the change of variable $\vz=(\vth-\vq)/r$, problem  \eqref{eq:P0} can be simplified to:
%
\begin{align} \label{eq:P1}
\begin{split}
\bmu(\vb)=\max_{\vz\in \R^{\dd} }
& \quad \vz^T\vb \\
\text{s.t.}
&\quad  \vz^T\vz-1 \leq 0 \\
&\quad  \vn_k^T\vz+\psi_k \leq 0, \quad k = 1,\dots,\nh
\end{split}
\end{align}
where $\psi_k = (\vn_k^T\vq-c_k)/r$.
The solution of \eqref{eq:P0} can be obtained as
$\mu(\vb)=\vq^T\vb+r\bar{\mu}(\vb)$.
This problem has the same linear objective function specified by $\vb$.
However, $\vz$ is constrained to lie in the
intersection of the unit ball and the $\nh$ half spaces $\vn_k^T\vth +\psi_k\leq 0$.
This is illustrated in Fig.~\ref{fig:dome}.

We call the region $\DM_k = \{\vz\colon \vz^T\vz-1 \leq 0, \vn_k^T\vz+\psi_k \leq 0\}$,
consisting of the intersection of the unit ball and the half space $\vn_k^T\vz+\psi_k\leq 0$, a dome. The unit vector $\vn_k$ is the normal to the dome and the scalar
$\psi_k$ gives the distance from $0$ to the dome base.
This is illustrated in Fig.~\ref{fig:dome}.

Let $N=[\vn_1,\dots,\vn_{\nh}]$ and $\vpsi=[\psi_1,\dots,\psi_{\nh}]^T$.
Then we can write the core problem more concisely as:
\begin{align}\label{eq:P2}
\begin{split}
\bmu(\vb)=\max_{\vz\in \R^{\dd} }
& \quad \vz^T\vb \\
\text{s.t.}
&\quad  \vz^T\vz-1 \leq 0 \\
&\quad  N^T\vz+\vpsi \leq \vzero.
\end{split}
\end{align}
Problem \eqref{eq:P1} is parameterized by the pair $(N,\vpsi)$.
$N$ specifies the dome axes and the vector $\psi$ specifies their respective sizes.

\begin{figure}[!t]
  \centering
  \centerline{\includegraphics[width=0.75\linewidth]{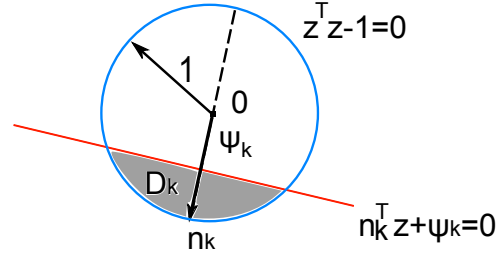}}
\caption{Illustration of the feasible set for Problem \eqref{eq:P1}, $k = 1, 2, \ldots, \nh$.}
\label{fig:dome}
\end{figure}

Let $O(\dd)$ denote the group of real $\dd\by \dd$
orthogonal matrices. If we transform the
parameters of Problem \eqref{eq:P1} by $Q$, then
$N$ maps to $QN$ and $\vpsi$ maps to $(N^TQ^TQ\vq -c)/r =\vpsi$,
where $\vc=(c_1,\dots,c_\nh)^T$. So the only change is to  the second constraint in Problem \eqref{eq:P1} which becomes $N^TQ^T\vz +\vpsi \leq \vzero$.
Setting $\vw=Q^T\vz$, the transformed problem
can be written as:
\begin{align*}
\begin{split}
\bmu_{Q}(\vb)=\max_{\vw\in \R^{\dd} }
& \quad \vw^T Q^T\vb \\
\text{s.t.}
&\quad  \vw^T\vw-1 \leq 0 \\
&\quad  N^T\vw+\vpsi \leq \vzero.
\end{split}
\end{align*}
Thus $\bmu_Q(\vb)=\bmu(Q^T\vb)$.
So if one ``rotates'' the problem by $Q$,
then  the solution $\bmu_Q(\vb)$
of the new problem is obtained by first inverse ``rotating'' $\vb$ via $Q^T$ and then computing
$\bmu(Q^T\vb)$. Intuitively, this is obvious.
But it indicates is that $\bmu$ must be determined by a function of the problem parameters, $(N,\psi)$, that is invariant under orthogonal transformations.
$\psi$ is invariant under $O(\dd)$.
So $\bmu$ must be a depend on a function of $N$
that is invariant under $O(\dd)$.
The simplest nontrivial function with this property is $N^TN$.
It is reasonable to expect that the function $\bmu$ depends only on the entries of $N^TN$ and $\psi$.
$N^TN$ determines the inter-dome configuration,
but not its overall orientation,
and $\psi$ specifies the respective dome sizes.

\section{A Symmetry Group of the Core Problem}
Let $\cN=\text{span}\{\vn_1, \dots, \vn_{\nh}\}$
and consider the subset $S_{\cN}$ of $O(\dd)$
defined by:
\begin{equation}\label{eq:defS_N}
S_{\cN}=\{R\colon R\in O(\dd), R\vn_k=\vn_k, k=1,\dots,\nh\}.
\end{equation}
It readily follows that for each $R\in S_{\cN}$,
the restriction of $R$ to $\cN$, denoted $R|\cN$, is the identity map.
Indeed if $\vx\in \cN$,
then $\vx=\sum_{i=1}^{\nh}\alpha_i \vn_i$
and hence
$R\vx =\sum_{i=1}^{\nh}\alpha_i R\vn_i =
\sum_{i=1}^{\nh}\alpha_i \vn_i= \vx$.


\begin{lemma}
$S_\cN$ is a subgroup of the orthogonal group $O(n)$.
\end{lemma}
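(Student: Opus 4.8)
The plan is to verify the standard subgroup criterion directly. By construction $S_\cN$ is already a subset of $O(\dd)$, so it suffices to show three things: that $S_\cN$ contains the identity (hence is nonempty), that it is closed under matrix multiplication, and that it is closed under taking inverses. Each of these is a one-line check using only the defining property $R\vn_k=\vn_k$ and the fact that $O(\dd)$ is itself a group.

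First I would observe that the identity matrix $I$ lies in $S_\cN$, since $I\vn_k=\vn_k$ for every $k=1,\dots,\nh$; in particular $S_\cN\neq\emptyset$. Next, for closure under products, take $R_1,R_2\in S_\cN$. Their product $R_1R_2$ again lies in $O(\dd)$ because the orthogonal matrices form a group, and for each $k$ we have $(R_1R_2)\vn_k = R_1(R_2\vn_k) = R_1\vn_k = \vn_k$, invoking first the fixing property of $R_2$ and then that of $R_1$; hence $R_1R_2\in S_\cN$. Finally, for closure under inverses, let $R\in S_\cN$; then $R^{-1}=R^T\in O(\dd)$, and applying $R^{-1}$ to both sides of $R\vn_k=\vn_k$ yields $\vn_k=R^{-1}\vn_k$ for all $k$, so $R^{-1}\in S_\cN$. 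Together these establish that $S_\cN$ is a subgroup of $O(\dd)$.

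There is no real obstacle here — the statement is elementary and the proof is a routine application of the subgroup test, so the "hard part" is essentially only to record the argument cleanly. If desired, I would append two remarks that are immediate and connect to the surrounding discussion: (i) as already noted in the excerpt, every $R\in S_\cN$ restricts to the identity on $\cN=\mathrm{span}\{\vn_1,\dots,\vn_{\nh}\}$, since $\cN$ is spanned by fixed vectors; and (ii) $S_\cN$ is in fact a \emph{closed} subgroup of $O(\dd)$, being the intersection of $O(\dd)$ with the closed affine sets $\{R:R\vn_k=\vn_k\}$, so it is a compact Lie subgroup — though this is more than the lemma requires.
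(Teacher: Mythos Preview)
Your proof is correct and follows essentially the same approach as the paper: both verify the subgroup criterion by checking that $I\in S_{\cN}$, that $S_{\cN}$ is closed under products, and that $R\in S_{\cN}$ implies $R^{-1}=R^T\in S_{\cN}$ (the paper obtains this last point identically, by applying $R^T$ to both sides of $R\vn_k=\vn_k$). Your appended remarks on the restriction to $\cN$ and on closedness are accurate but go beyond what the paper records.
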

\begin{proof}
One only needs to verify that $I\in S_{\cN}$, $R\in S_{\cN}$
implies $R^T\in S_{\cN}$ and that $S_{\cN}$ is closed
under matrix multiplication. The first and third properties
are clear. The second follows by noting that
$R\vn_k=\vn_k$ implies $R^TR\vn_k=R^T\vn_k$ and hence that
$\vn_k=R^T\vn_k$.
\end{proof}

\begin{lemma}
For each $R\in S_{\cN}$,
$R(\DM_k)=\DM_k$, $k=1,\dots,\nh$.
\end{lemma}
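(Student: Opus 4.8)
The plan is to show that $R$ maps the dome $\DM_k$ onto itself by checking that $R$ preserves each of the two defining constraints of $\DM_k$, namely the unit-ball constraint $\vz^T\vz \leq 1$ and the half-space constraint $\vn_k^T\vz + \psi_k \leq 0$. Since $R$ is a bijection of $\R^{\dd}$ (being orthogonal), it suffices to prove the set inclusion $R(\DM_k) \subseteq \DM_k$ for every $R \in S_{\cN}$; applying this also to $R^{-1} = R^T \in S_{\cN}$ (which lies in $S_{\cN}$ by the previous lemma) gives $\DM_k = R(R^{-1}(\DM_k)) \subseteq R(\DM_k)$, and the two inclusions yield equality.

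For the inclusion, I would take an arbitrary $\vz \in \DM_k$ and show $R\vz \in \DM_k$. First, orthogonality gives $(R\vz)^T(R\vz) = \vz^T R^T R \vz = \vz^T\vz \leq 1$, so the ball constraint is preserved. Second, for the half-space constraint I use the key property already established in the excerpt: since $R \in S_{\cN}$ we have $R\vn_k = \vn_k$, equivalently $R^T\vn_k = \vn_k$ (from $R\vn_k = \vn_k \Rightarrow R^T R \vn_k = R^T \vn_k \Rightarrow \vn_k = R^T\vn_k$). Hence $\vn_k^T(R\vz) = (R^T\vn_k)^T\vz = \vn_k^T\vz$, so $\vn_k^T(R\vz) + \psi_k = \vn_k^T\vz + \psi_k \leq 0$. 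Both constraints hold, so $R\vz \in \DM_k$, establishing $R(\DM_k)\subseteq \DM_k$.

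There is essentially no hard part here: the statement follows immediately from the two facts that orthogonal maps preserve Euclidean norms and that elements of $S_{\cN}$ fix each $\vn_k$. The only point requiring a word of care is the reduction from the set equality to a single inclusion, which relies on $S_{\cN}$ being closed under inverses — but this is exactly the content of the preceding lemma, so I may invoke it directly. I would write the proof in three short sentences: invoke closure under inverse to reduce to one inclusion, check the ball constraint via $R^TR = I$, and check the half-space constraint via $R^T\vn_k = \vn_k$.

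\begin{proof}
Since $R\in S_{\cN}$ is orthogonal it is a bijection of $\R^{\dd}$, and by the previous lemma $R^T=R^{-1}\in S_{\cN}$; hence it suffices to show $R(\DM_k)\subseteq \DM_k$, as applying this to $R^T$ then gives $\DM_k=R\bigl(R^T(\DM_k)\bigr)\subseteq R(\DM_k)$, and the two inclusions yield $R(\DM_k)=\DM_k$. So let $\vz\in\DM_k$. Then $(R\vz)^T(R\vz)=\vz^TR^TR\vz=\vz^T\vz\leq 1$. Moreover, $R\vn_k=\vn_k$ implies $R^T\vn_k=R^TR\vn_k=\vn_k$, so $\vn_k^T(R\vz)=(R^T\vn_k)^T\vz=\vn_k^T\vz$, whence $\vn_k^T(R\vz)+\psi_k=\vn_k^T\vz+\psi_k\leq 0$. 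Thus $R\vz\in\DM_k$, which completes the proof.
\end{proof}
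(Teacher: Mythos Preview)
Your proof is correct and is in fact cleaner than the paper's own argument. The paper proves both inclusions $R(\DM_k)\subseteq\DM_k$ and $\DM_k\subseteq R(\DM_k)$ separately, in each case decomposing a point as $\alpha\vn_k+\ve_0$ with $\vn_k^T\ve_0=0$ and then tracking how $R$ acts on the two pieces; this decomposition is unnecessary, and the paper even writes the Pythagorean identity incorrectly as $\|\alpha\vn_k+\ve_0\|_2=|\alpha|+\|\ve_0\|_2$ (harmless for the argument, but sloppy). You instead reduce to a single inclusion using Lemma~1 (closure of $S_{\cN}$ under inverses), and verify the two constraints directly from $R^TR=I$ and $R^T\vn_k=\vn_k$, with no decomposition at all. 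Your route is shorter, uses the group structure more cleanly, and makes transparent that the result follows immediately from the two defining properties of $S_{\cN}$.
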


\begin{proof}
We first show that $R(\DM_k) \subset \DM_k$.
Let $\vd \in R(\DM_k)$.
So there exists $\ve\in \DM_k$ with $\vd=R\ve$.
We can uniquely write $\ve$ in the form
$\ve = \alpha\vn_k+\ve_0$, where $\vn_k^T\ve_0=0$.
Since $\ve\in \DM_k$, we have $\|\ve\|_2\leq 1$
and hence $\|\alpha \vn_k + \ve_0 \|_2=
|\alpha| + \| \ve_0 \|_2 \leq 1$. In addition, $\vn_k^T\ve+\psi_k\leq 0$.
Hence $\vn_k^T(\alpha \vn_k + \ve_0)+\psi_k
=\alpha +\psi_k \leq 0$.
Now
    \begin{equation*}
    \vd = R\ve  = R(\alpha\vn_k+\ve_0)
          = \alpha(R\vn_k)+R\ve_0
          = \alpha\vn_k+R\ve_0
    \end{equation*}

So $\|d\|_2=\|\alpha\vn_k+R\ve_0\|_2
= |\alpha|+\|\ve_0\|_2 \leq 1$ and
$\vn_k^T\vd+\psi_k = \vn_k^T(\alpha\vn_k+R\ve_0)
+\psi_k
= \alpha +\psi_k \leq 0$. Thus $\vd\in \DM_k$.

We now show that $\DM_k \subset R(\DM_k)$.
Let $\vd \in \DM_k$. So $\|\vd\|_2\leq 1$
and $\vn_k^T\vd+\psi_k\leq 0$.
Write $\vd$ uniquely in the form
$\vd = \alpha \vn_k+\vd_0$,
with $\vn_k^T\vd_0=0$.
Then $\|\vd\|_2 =\|\alpha \vn_k+\vd_0\|_2
=|\alpha| +\|\vd_0\|_2 \leq 1$ and
$\vn_k^T\vd+\psi_k = \alpha +\psi_k\leq 0$.
Now
\begin{align*}
\vd & = RR^T\vd = RR^T(\alpha\vn_k+\vd_0) = R(\alpha R^T\vn_k+R^T\vd_0)\\
& =R(\alpha\vn_k+R^T\vd_0).
\end{align*}
Set $\ve = \alpha \vn_k+R^T\vd_0$.
So  $\vd = R\ve$.
It only remains to show that $\ve\in \DM_k$.
This follows by noting that
$\|\ve\|_2 =\|\alpha \vn_k+R^T\vd_0\|_2
= |\alpha| + \|\vd_0\|_2 \leq 1$
and
$\vn_k^T\vd+\psi_k = \alpha+\psi_k \leq 0$.
\end{proof}

\begin{lemma}
For each $R\in S_{\cN}$,
    $R(\cap_k\DM_k) = \cap_k R(\DM_k)$.
\end{lemma}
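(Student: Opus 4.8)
The plan is to establish the identity as a purely set-theoretic fact about the image of an intersection under an injective map, and then to combine it with the previous lemma ($R(\DM_k)=\DM_k$) to conclude that the feasible set $\cap_k\DM_k$ of the core problem is itself fixed by every $R\in S_{\cN}$.

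First I would verify the inclusion $R(\cap_k\DM_k)\subseteq\cap_k R(\DM_k)$, which holds for \emph{any} map $R$ on $\R^{\dd}$: if $\vd\in R(\cap_k\DM_k)$, then $\vd=R\ve$ for some $\ve\in\cap_k\DM_k$; since $\ve\in\DM_k$ for every $k$, we get $\vd=R\ve\in R(\DM_k)$ for every $k$, hence $\vd\in\cap_k R(\DM_k)$.

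Next I would prove the reverse inclusion $\cap_k R(\DM_k)\subseteq R(\cap_k\DM_k)$, which is the only place injectivity is used. Let $\vd\in\cap_k R(\DM_k)$; then for each $k$ there is $\ve_k\in\DM_k$ with $\vd=R\ve_k$. Because $R\in O(\dd)$ is invertible, $\ve_k=R^T\vd$ is the \emph{same} vector for all $k$; call it $\ve$. Then $\ve\in\DM_k$ for every $k$, so $\ve\in\cap_k\DM_k$ and $\vd=R\ve\in R(\cap_k\DM_k)$. Combining the two inclusions gives the claimed equality, and composing with the previous lemma yields $R(\cap_k\DM_k)=\cap_k\DM_k$, i.e.\ the feasible region of the core problem is $S_{\cN}$-invariant.

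The main obstacle: there is essentially none of substance here; the statement is elementary once one observes that an invertible map commutes with arbitrary intersections. The only point that genuinely requires care is that the $\supseteq$ direction fails for general (non-injective) maps, so I would make sure to invoke $R\in O(\dd)$ explicitly rather than leave it implicit.
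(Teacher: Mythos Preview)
Your argument is correct and mirrors the paper's proof almost verbatim: both prove the two inclusions separately, with the forward one holding for any map and the reverse one relying on the invertibility of $R\in O(\dd)$ to force all the preimages $\ve_k=R^T\vd$ to coincide. Your final remark combining this with $R(\DM_k)=\DM_k$ to get $R(\cap_k\DM_k)=\cap_k\DM_k$ is exactly what the paper records as the next lemma.
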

\begin{proof}
We first show that  $R(\cap_k\DM_k)
\subseteq  \cap_k R(\DM_k)$.
Let $\vd \in R(\cap_k\DM_k)$.
Then there exists $\ve \in \cap_k\DM_k$ such that $\vd =R\ve$. Since $\ve\in \DM_k$, $k=1,\dots,\nh$, $\vd \in R(\DM_k)$, $k=1,\dots,\nh$. Thus $\vd \in \cap_kR(\DM_k)$.

Now we show that $R(\cap_k\DM_k)
\supseteq  \cap_k R(\DM_k)$.
Let $\vd \in \cap_k R(\DM_k)$.
So $\vd \in R(\DM_k)$, $k=1,\dots,\nh$.
Hence there exists $\ve_k \in \DM_k$ such that
$R(\ve_k)=\vd$, $k=1,\dots,\nh$. The invertibility of $R$ implies that $\ve_k =\ve=R^T \vd$.
So $\vd=R\ve$ with $\ve \in \cap_k \DM_k$.
Thus $\vd \in R(\cap_k \DM_k)$.
\end{proof}

The following result follows immediately
from the two previous lemmas.

\begin{lemma} \label{lem:RF=F}
For each $R\in S_{\cN}$,
$R(\cap_k\DM_k)
=\cap_k\DM_k$.
\end{lemma}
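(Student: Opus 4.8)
The plan is simply to chain together the two lemmas immediately preceding the statement. The last of these gives the set identity $R\bigl(\bigcap_k \DM_k\bigr) = \bigcap_k R(\DM_k)$ for every $R \in S_{\cN}$; it holds because $R$ is invertible, so applying $R$ commutes with taking the finite intersection of the domes $\DM_k$. The second-to-last lemma records that each dome is individually fixed, $R(\DM_k) = \DM_k$ for $k = 1, \dots, \nh$. Substituting the latter into the former yields the claim.

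Concretely, I would write the one-line display
\[
R\Bigl(\bigcap_{k=1}^{\nh} \DM_k\Bigr) \;=\; \bigcap_{k=1}^{\nh} R(\DM_k) \;=\; \bigcap_{k=1}^{\nh} \DM_k,
\]
in which the first equality is the intersection--commuting lemma and the second applies, termwise, the lemma that each $\DM_k$ is $R$-invariant. Nothing further is required, and this is why the text can assert that the result ``follows immediately.''

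The only point that deserves a moment's care --- and it is not really an obstacle --- is the order of the two reductions: one must first pull $R$ through the intersection, so that it acts on a single dome at a time, and only then invoke the dome-by-dome invariance, since that invariance was established for each $\DM_k$ separately and not for the intersection directly. All the substantive work --- the orthogonal-decomposition argument controlling the norm and half-space constraints under $R$, and the use of $R^{-1} = R^T$ in the intersection identity --- has already been carried out in those two lemmas, so the present proof reduces to a pure substitution.
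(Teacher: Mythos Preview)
Your proposal is correct and matches the paper's own reasoning exactly: the paper offers no proof beyond the remark that the result ``follows immediately from the two previous lemmas,'' and your one-line chain $R(\cap_k \DM_k) = \cap_k R(\DM_k) = \cap_k \DM_k$ is precisely the intended substitution.
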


Now $\cap_{k=1}^{\nh} \DM_k$ is the set $\FS$ of feasible points for problem \eqref{eq:P1}. So Lemma \ref{lem:RF=F} indicates that $\FS$ is invariant under the group $S_{\cN}$.

The symmetry group of $\FS$ is the subgroup $S_{\FS}$
of the orthogonal group $O(\dd)$  with the property that $R\in S_{\FS}$ if and only if $R(\FS)=\FS$.
Hence, by Lemma \ref{lem:RF=F},
$S_{\cN}$ is a subset of the symmetry group of $\FS$.
In specific cases, $S_{\cN}$ can be a strict subset of
$S_{\FS}$ due to symmetries among the domes $\DM_k$.
For example, consider $\nh=2$ with $\vn_1\neq \vn_2$
but $\psi_1=\psi_2$. In this case, the domes $\DM_1$ and $\DM_2$ are identical except one is centered along
$\vn_1$ and the other along $\vn_2$.
A reflection about the hyperplane formed as the
perpendicular bisector of the line joining $\vn_1$ and $\vn_2$, maps $\DM_1$ to $\DM_2$ and
vice versa. It is thus a symmetry of $\DM_1\cap \DM_2$
but does not leave $\vn_1, \vn_2$ invariant and hence is not in $S_{\cN}$. On the other hand this symmetry is
not structurally stable in the sense that an arbitrarily small perturbation of the parameters will result in $\psi_1\neq \psi_2$ and hence in the loss of this symmetry.

%

The group $S_{\cN}$ splits $\R^{\dd}$ into
mutually exclusive equivalences classes with
\begin{equation}
\vz_1 \simS \vz_2 \quad \Leftrightarrow \quad
\vz_2=R\vz_1, \text{ some } R\in S_{\cN}
\end{equation}
The equivalence class of $\vz\in \R^{\dd}$, denoted $[\vz]$, is the set all points equivalent to $\vz$.
Since the elements of $S_{\cN}$ are orthogonal,
elements in the same equivalence class have the same norm. So $[\vz]$ is a subset of the sphere of radius $\|\vz\|_2$.

Let $\facsp = \{[\vx]\colon \vx\in \R^{\dd}\}$ denote
the set of all equivalences classes defined by the action of $S_{\cN}$ on $\R^{\dd}$.

\section{The Invariance of $\bmu$ Under $S_{\cN}$}

We now show that the value of $\bmu(\vx)$
is the same for all elements of $[\vx]$.


\begin{proposition}\label{pro:invmu}
For each $\vx,\vy\in \R^{\dd}$, if
$\vx \simS \vy$, then
$\bmu(\vx)=\bmu(\vy)$.
 \end{proposition}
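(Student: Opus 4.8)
The plan is to exploit the symmetry of the feasible set established in Lemma~\ref{lem:RF=F} together with the fact that every $R\in S_{\cN}$ is orthogonal. Suppose $\vx\simS\vy$, so that $\vy=R\vx$ for some $R\in S_{\cN}$. First I would unwind the definitions: $\bmu(\vy)=\max_{\vz\in\FS}\vz^T\vy=\max_{\vz\in\FS}\vz^T(R\vx)=\max_{\vz\in\FS}(R^T\vz)^T\vx$. The key observation is that as $\vz$ ranges over $\FS$, the vector $\vw=R^T\vz$ also ranges over $\FS$: since $R\in S_{\cN}$ is a subgroup element, $R^T\in S_{\cN}$ as well (Lemma~1), and by Lemma~\ref{lem:RF=F} applied to $R^T$ we have $R^T(\FS)=\FS$. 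Hence the substitution $\vw=R^T\vz$ is a bijection of the feasible set onto itself, and $\max_{\vz\in\FS}(R^T\vz)^T\vx=\max_{\vw\in\FS}\vw^T\vx=\bmu(\vx)$.

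To make the reindexing rigorous I would argue the two inequalities separately (to avoid any worry about whether the max is attained, although it is, since $\FS$ is compact and nonempty provided it is nonempty — and if $\FS$ is empty both sides are $-\infty$ by convention). For any feasible $\vz$, the point $R^T\vz$ is feasible, so $\vz^T\vy=(R^T\vz)^T\vx\le\bmu(\vx)$; taking the supremum over feasible $\vz$ gives $\bmu(\vy)\le\bmu(\vx)$. Symmetrically, applying the same argument with the roles of $\vx,\vy$ swapped and $R$ replaced by $R^T$ (noting $\vx=R^T\vy$) gives $\bmu(\vx)\le\bmu(\vy)$. Combining, $\bmu(\vx)=\bmu(\vy)$.

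I do not expect a genuine obstacle here: the proposition is essentially a restatement of the invariance already packaged in Lemma~\ref{lem:RF=F}, and the only thing to be careful about is the direction of the transformation (using $R^T$ rather than $R$ inside the objective) and invoking that $S_{\cN}$ is a group so that $R^T(\FS)=\FS$ follows from Lemma~\ref{lem:RF=F}. A minor point worth stating explicitly is orthogonality of $R$, which is what lets me move $R$ from $\vx$ onto $\vz$ as $R^T$ via $\vz^T(R\vx)=(R^T\vz)^T\vx$; this is the only place the structure of $O(\dd)$ (beyond being a bijection) is used, and in fact mere invertibility of $R$ together with $R(\FS)=\FS$ would suffice for the argument, so I would phrase it in that lightweight way.
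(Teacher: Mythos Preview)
Your argument is correct and essentially identical to the paper's: write $\bmu(R\vx)=\max_{\vz\in\FS}(R^T\vz)^T\vx$, use $R^T\in S_{\cN}$ and Lemma~\ref{lem:RF=F} to get $R^T(\FS)=\FS$, and substitute $\vw=R^T\vz$. One small correction to your closing aside: the identity $\vz^T(R\vx)=(R^T\vz)^T\vx$ holds for any matrix, so orthogonality is not used there; where orthogonality actually enters is in concluding $R^T(\FS)=\FS$ from $R(\FS)=\FS$ (via $R^T=R^{-1}$, or equivalently via $R^T\in S_{\cN}$), so ``mere invertibility of $R$ together with $R(\FS)=\FS$'' would \emph{not} suffice.
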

\begin{proof} Let $R\in S_{\cN}$.
From the definition of $\bmu(\vx)$ we have
\begin{align*}
\bmu(R\vx)
& = \max_{\vz \in \cap_k \DM_k}~\vz^TR\vx
    = \max_{\vz\in \cap_k \DM_k}~(R^T\vz)^T\vx \\
& = \max_{\vw \in R^T(\cap_k\DM_k)} \quad \vw^T\vx
\end{align*}
Since $R^T\in S_{\cN}$,
we can use Lemma \ref{lem:RF=F} to replace
$R^T(\cap_k\DM_k)$ by $\cap_k \DM_k$, yielding
\begin{equation*}
\bmu(R\vx)
= \max_{\vw\in \cap_k \DM_k} \vw^T\vx
= \bar{\mu}(\vx)
.
\end{equation*}
\end{proof}

Proposition \ref{pro:invmu} allows us to define a function $f_{\bmu}\colon \facsp\rightarrow \R$
by $f_{\bmu}([\vx]) =\bmu(\vx)$.
In this sense, the value of $\bmu$ at a point $\vx$
is determined by just knowing the equivalence class of $\vx$.


Suppose the columns of $V=[\vv_1,\dots,\vv_{\nh}]$ are in $\cN$.
Then for each $R\in S_{\cN}$, we have $RV=V$.
Hence for any $R\in S_{\cN}$ and $\vx\in \R^{\dd}$,
\begin{equation}\label{eq:VTz=VTRz}
V^T (R \vx) = (R^TV)^T \vx = V^T\vx.
\end{equation}
In particular, by looking at the $k$-th component of
\eqref{eq:VTz=VTRz}, we see that $\vv_k^T R \vx=\vv_k^T\vx$, $k=1,\dots,\nh$.
A special case is $V=N=[\vn_1, \vn_2, \ldots, \vn_m]$.
Thus for all $R\in S_{\cN}$ and $\vx\in \R^{\dd}$,
\begin{equation}\label{eq:NTz=NTRz}
N^T (R \vx) = (R^TN)^T \vx = N^T\vx.
\end{equation}
and $\vn_k^T R \vx=\vn_k^T\vx$, $k=1,\dots,\nh$.

\section{Orthogonal Projection onto $\cN$}


For $\vz\in \R^{\dd}$, let $\widehat{\vz}$ denote the  point obtained by orthogonally projecting $\vz$ onto the subspace $\cN$.
Let the columns of $V=[\vv_1, \vv_2, \ldots, \vv_m]$ form a basis for $\mathcal{N}$.
Then $\widehat{\vz}$ is given by,
\begin{equation}
\widehat{\vz} = V(V^TV)^{-1}V^T\vz
\end{equation}
In particular, if $V=[\vv_1,\dots,\vv_{\nh}]$ has orthonormal columns, then
$\widehat{\vz}=V V^T \vz$.

%

We first show that all points in $[\vz]$ have
the same orthogonal projection $\widehat{\vz}$.

\begin{lemma}\label{lem:z1sz2}
If $\vz_1 \simS \vz_2$,  then $\widehat{\vz}_1 =\widehat{\vz}_2$.
\end{lemma}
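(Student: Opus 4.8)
The plan is to unwind the definitions and use the key identity $N^T(R\vx) = N^T\vx$ established for all $R \in S_{\cN}$ in equation \eqref{eq:NTz=NTRz}. Since $\vz_1 \simS \vz_2$, there is some $R \in S_{\cN}$ with $\vz_2 = R\vz_1$. The orthogonal projection onto $\cN$ can be written using a basis $V$ whose columns span $\cN$; the cleanest choice is to take the columns of $V$ to be among the vectors $\vn_k$ (or, more carefully, an orthonormal basis of $\cN$ built so that the projection formula applies). The essential point is that projection onto $\cN$ depends on $\vx$ only through the inner products $\vv_k^T\vx$ with a spanning set of $\cN$, and those inner products are invariant under $S_{\cN}$ by \eqref{eq:VTz=VTRz}.

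Concretely, first I would fix a basis $V = [\vv_1,\dots,\vv_m]$ for $\cN$ and write $\widehat{\vz} = V(V^TV)^{-1}V^T\vz$, as in the displayed projection formula. Second, I would observe that since each column $\vv_j$ lies in $\cN$ and $R \in S_{\cN}$ restricts to the identity on $\cN$, we have $RV = V$, hence $V^TR = (R^TV)^T = V^T$ (using $R^T \in S_{\cN}$ as well, which is the content of Lemma 1). Third, I would simply compute
\begin{equation*}
\widehat{\vz}_2 = V(V^TV)^{-1}V^T\vz_2 = V(V^TV)^{-1}V^T(R\vz_1) = V(V^TV)^{-1}(V^TR)\vz_1 = V(V^TV)^{-1}V^T\vz_1 = \widehat{\vz}_1,
\end{equation*}
where the middle equality uses $V^TR = V^T$. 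This closes the argument.

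There is really no main obstacle here; the proof is a direct substitution once \eqref{eq:VTz=VTRz} is invoked. The only mild subtlety is making sure the invariance $V^TR = V^T$ is applied in the correct order — it comes from $RV = V$ (each basis vector of $\cN$ is fixed by $R$) transposed, together with the fact from Lemma 1 that $R \in S_{\cN} \Rightarrow R^T \in S_{\cN}$ so that $R^T$ also fixes $\cN$ pointwise. One could alternatively phrase the whole thing coordinate-wise: $\widehat{\vz}$ is the unique element of $\cN$ such that $\vv_k^T\widehat{\vz} = \vv_k^T\vz$ for all $k$, and since $\vv_k^TR\vz_1 = \vv_k^T\vz_1$ for every $k$ by \eqref{eq:VTz=VTRz}, the vectors $\widehat{\vz}_1$ and $\widehat{\vz}_2$ satisfy the same defining linear system and hence coincide. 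I would likely present the matrix computation as the main line and mention the coordinate-wise viewpoint as the conceptual reason.
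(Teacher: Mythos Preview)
Your proof is correct and follows essentially the same route as the paper: fix a basis $V$ for $\cN$, write the projection as $V(V^TV)^{-1}V^T\vz$, use $\vz_2=R\vz_1$ for some $R\in S_{\cN}$, and invoke \eqref{eq:VTz=VTRz} to conclude $V^TR\vz_1=V^T\vz_1$. The paper's version is terser, but the argument is the same.
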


\begin{proof}
Let the columns of $V$ be a basis for $\cN$.
Then
$\widehat{\vz}_k = V(V^TV)^{-1}V^T\vz_k$, $k=1,2$.
Since $\vz_1 \simS \vz_2$, there exists
$R\in S_{\cN}$ such that $\vz_2=R\vz_1$.
Hence $\widehat{\vz}_2 = \widehat{R\vz_1}=
V(V^TV)^{-1}V^TR\vz_1$.
Thus by \eqref{eq:VTz=VTRz},
$\widehat{\vz}_1=\widehat{\vz}_2$.
\end{proof}

Lemma \ref{lem:z1sz2}
indicates that $\widehat{\vz}$
is determined by knowing the equivalence
class of $\vz$.

We can refine this slightly further by making the basis for $\cN$ explicit.

\begin{lemma}\label{lem:cz1=cz2}
Let the columns of $V$ form a basis for $\cN$.
If $\vz_1 \simS \vz_2$,  then $V^T\vz_1 =V^T\vz_2$.
\end{lemma}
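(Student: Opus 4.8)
The plan is to reduce Lemma~\ref{lem:cz1=cz2} to equation~\eqref{eq:VTz=VTRz}, which has already been established. Since $\vz_1 \simS \vz_2$, by definition of the equivalence relation there exists $R \in S_{\cN}$ with $\vz_2 = R\vz_1$. Applying $V^T$ to both sides gives $V^T\vz_2 = V^T R \vz_1$, and the goal is to show the right-hand side equals $V^T\vz_1$.

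The key observation is that the columns of $V$ lie in $\cN$ (they form a basis for $\cN$), so each $\vv_k \in \cN$. By the remark following the definition of $S_{\cN}$ — namely that every $R \in S_{\cN}$ restricts to the identity on $\cN$ — we have $R\vv_k = \vv_k$ for each $k$, i.e.\ $RV = V$, hence $R^T V = V$ as well (using $R^T \in S_{\cN}$, which holds since $S_{\cN}$ is a subgroup by Lemma~1). This is exactly the hypothesis under which \eqref{eq:VTz=VTRz} was derived, so $V^T(R\vz_1) = (R^T V)^T \vz_1 = V^T \vz_1$. Chaining the equalities yields $V^T\vz_2 = V^T\vz_1$, which is the claim.

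I do not anticipate a genuine obstacle here: the statement is essentially a restatement of \eqref{eq:VTz=VTRz} with the roles of $\vx$ and the equivalence relation made explicit, and Lemma~\ref{lem:z1sz2} has already used the same mechanism to conclude $\widehat{\vz}_1 = \widehat{\vz}_2$. The only point requiring a line of care is the direction of the transpose — one must invoke $R^T \in S_{\cN}$ (not just $R \in S_{\cN}$) so that $R^T V = V$; this is why the subgroup property (Lemma~1) is needed rather than just the definition of $S_{\cN}$. Everything else is a two-line computation, so the proof will be short.

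It is perhaps worth noting in the proof that this lemma strictly refines Lemma~\ref{lem:z1sz2}: once $V^T\vz_1 = V^T\vz_2$ is known for a basis $V$ of $\cN$, applying $V(V^TV)^{-1}$ on the left immediately recovers $\widehat{\vz}_1 = \widehat{\vz}_2$, since $\widehat{\vz}_k = V(V^TV)^{-1}V^T\vz_k$. Thus the coordinate vector $V^T\vz$, and not merely the projected point $\widehat{\vz}$, is a well-defined function of the equivalence class $[\vz]$ — which is the fact the subsequent development (reducing \eqref{eq:P1} from dimension $\dd$ to $\nh$) will rely on.
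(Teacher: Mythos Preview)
Your proof is correct and is in fact slightly more direct than the paper's. The paper proves Lemma~\ref{lem:cz1=cz2} by first invoking Lemma~\ref{lem:z1sz2} to get $\widehat{\vz}_1=\widehat{\vz}_2$, i.e.\ $V(V^TV)^{-1}V^T\vz_1=V(V^TV)^{-1}V^T\vz_2$, and then cancelling the injective factor $V(V^TV)^{-1}$ to conclude $V^T\vz_1=V^T\vz_2$. You instead go straight to \eqref{eq:VTz=VTRz}, which is the same identity that underlies the paper's proof of Lemma~\ref{lem:z1sz2}; your observation that Lemma~\ref{lem:cz1=cz2} actually \emph{refines} Lemma~\ref{lem:z1sz2} (rather than following from it) is apt. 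Your care about needing $R^T\in S_{\cN}$ so that $R^TV=V$ is also well placed --- this is indeed the step where Lemma~1 enters, and the paper's own derivation of \eqref{eq:VTz=VTRz} relies on it implicitly.
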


\begin{proof}
By Lemma \ref{lem:z1sz2} we know that the vectors
\begin{equation*}
\widehat{\vz}_k = V(V^TV)^{-1}V^T\vz_k, \quad k=1,2,
\end{equation*}
are equal.
Hence $V^T\vz_1=V^T\vz_2$.
\end{proof}

For a fixed basis $V$ for $\cN$, Lemma \ref{lem:cz1=cz2}
shows that $V^T\vz$ is determined by knowing the equivalence class of $\vz$.

Next we show that $\widehat{\vb}$ together with
$C=\|\vb\|_2$, uniquely determine $[\vb]$ and hence
$\bmu(\vb)$.

\begin{lemma}\label{lem:hz1=hz2}
If $\|\vz_1\|_2=\|\vz_2\|_2$ and
$\widehat{\vz}_1=\widehat{\vz}_2$,
then $\vz_1 \simS \vz_2$ and $\bmu(\vz_1)=\bmu(\vz_2)$.
\end{lemma}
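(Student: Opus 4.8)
The plan is to construct an explicit orthogonal matrix $R\in S_{\cN}$ that carries $\vz_1$ to $\vz_2$; once $\vz_1\simS\vz_2$ has been established, the equality $\bmu(\vz_1)=\bmu(\vz_2)$ follows immediately from Proposition~\ref{pro:invmu}.

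First I would work in the orthogonal decomposition $\R^{\dd}=\cN\oplus\cN^{\perp}$. Write $\vz_i=\widehat{\vz}_i+\vz_i^{\perp}$ with $\vz_i^{\perp}\in\cN^{\perp}$, $i=1,2$. The hypothesis $\widehat{\vz}_1=\widehat{\vz}_2$ says that the two vectors share the same $\cN$-component, and the Pythagorean identity $\|\vz_i\|_2^2=\|\widehat{\vz}_i\|_2^2+\|\vz_i^{\perp}\|_2^2$ combined with $\|\vz_1\|_2=\|\vz_2\|_2$ forces $\|\vz_1^{\perp}\|_2=\|\vz_2^{\perp}\|_2$. So the two vectors differ only in their $\cN^{\perp}$-components, and those components have equal length.

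Next, since $\vz_1^{\perp}$ and $\vz_2^{\perp}$ are vectors of equal norm in the Euclidean space $\cN^{\perp}$, there is an orthogonal transformation $R'$ of $\cN^{\perp}$ with $R'\vz_1^{\perp}=\vz_2^{\perp}$: take $R'$ to be the identity if both are zero, and otherwise a rotation in the plane they span (extended by the identity on its orthogonal complement within $\cN^{\perp}$), or a reflection if they happen to be collinear. Define $R$ on $\R^{\dd}$ to act as the identity on $\cN$ and as $R'$ on $\cN^{\perp}$. Then $R\in O(\dd)$, and because each $\vn_k\in\cN$ we have $R\vn_k=\vn_k$, so $R\in S_{\cN}$. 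Moreover $R\vz_1=R\widehat{\vz}_1+R\vz_1^{\perp}=\widehat{\vz}_1+\vz_2^{\perp}=\widehat{\vz}_2+\vz_2^{\perp}=\vz_2$, which gives $\vz_1\simS\vz_2$. Applying Proposition~\ref{pro:invmu} then yields $\bmu(\vz_1)=\bmu(\vz_2)$.

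The only step needing any care is the middle one: checking that the block map $R$ is genuinely orthogonal on $\R^{\dd}$ and that $R'$ exists — i.e., the standard fact that two equal-norm vectors in a real inner-product space lie in a common orbit of the orthogonal group. This is routine, and the potentially awkward degenerate cases (one or both of the $\vz_i^{\perp}$ vanishing) are absorbed by simply letting $R'$ be the identity there. I do not anticipate any genuine obstacle; the lemma is essentially a packaging of the orbit structure of $S_{\cN}$ established in the preceding sections together with Proposition~\ref{pro:invmu}.
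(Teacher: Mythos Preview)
Your proposal is correct and follows essentially the same route as the paper: decompose each $\vz_i$ into its $\cN$-component and $\cN^{\perp}$-component, use Pythagoras to see the $\cN^{\perp}$-components have equal norm, build an orthogonal map that is the identity on $\cN$ and sends one residual to the other, and then invoke Proposition~\ref{pro:invmu}. The paper writes down $R$ explicitly as $R=UW^T$ for two orthogonal matrices $U,W$ sharing the first $\nh$ columns (an ON basis of $\cN$) and having $\vz_1^0/\|\vz_1^0\|_2$ and $\vz_2^0/\|\vz_2^0\|_2$ as the $(\nh{+}1)$-th columns, whereas you appeal to the standard fact that equal-norm vectors are in the same $O$-orbit; your treatment of the degenerate case $\vz_i^{\perp}=0$ is actually cleaner than the paper's, which tacitly assumes $\|\vz_k^0\|_2\neq 0$.
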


\begin{proof}
Let $\|\vz_1\|_2=\|\vz_2\|_2=C$ and
$\widehat{\vz}_1 = \widehat{\vz}_2 = \widehat{\vz}$.
Decompose each $\vz_k$ as the sum of its projection and the corresponding
orthogonal residual. So $\vz_k  = \widehat{\vz}+\vz_k^0$
and $\norm{\vz_k^0}{2}^2
= C^2 - \norm{\widehat{\vz}}{2}^2$, $k=1,2$.

Now $R\vz_2=R\widehat{\vz}+R\vz_2^0 = \widehat{\vz}+R\vz_2^0$.
So to prove $\vz_1\simS \vz_2$, we only need to show
that there exists $R \in S_\mathcal{N}$ with
$\vz_1^0 = R\vz_2^0$.
We construct such a $R$  below.

Let the columns of $V\in \R^{\dd\by \nh}$ form an orthonormal basis for $\cN$.
Set $\vu_{\nh+1}=\vz_1^0/\|\vz_1^0\|_2$.
Then select $U_2\in \R^{\dd\by (\dd-\nh-1)}$
so that $U=[V ~\vu_{\nh+1}~U_2]$
is an orthogonal matrix.
Similarly, set $\vv_{\nh+1} = \vz_2^0/\|\vz_2^0\|_2$
and select  $V_2\in \R^{\dd\by (\dd-\nh-1)}$ so that
$W=[V~\vv_{\nh+1}~ V_2]$ is an orthogonal
matrix. Now set $R=UW^T$. Since $U$ and $W$
are orthogonal, so is $R$. Moreover, for $\vz \in \cN$,
$R\vz= UW^T \vz = VV^T\vz = \vz$.
So $R\in S_{\cN}$. Finally, using the fact that
$\|\vz_1^0\|_2=\|\vz_2^0\|_2$ we have
\begin{equation*}
R\vz_2^0 = UW^T\vz_2^0
= \frac{\vz_1^0}{\|\vz_1^0\|_2}  \|\vz_2^0\|_2 = \vz_1^0
\end{equation*}
The fact that $\bmu(\vz_1)=\bmu(\vz_2)$
then follows by Proposition \ref{pro:invmu}.
%
\end{proof}

By Lemma \ref{lem:hz1=hz2}, if we fix a value for the norm, say $C=1$, then $\bmu(\vb)$ is uniquely determined by $\widehat{\vb}$.
This can be slightly refined by making the basis for $\cN$ explicit.

\begin{lemma}\label{lem:VTz1=VTz2}
Let the columns of $V$ form a basis for $\cN$.
If $\|\vb_1\|_2=\|\vb_2\|_2$ and
$V^T\vb_1 = V^T\vb_2$,
then
$\vb_1 \simS \vb_2$ and
$\bmu(\vb_1)=\bmu(\vb_2)$.
\end{lemma}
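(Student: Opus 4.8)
The plan is to reduce this statement to Lemma~\ref{lem:hz1=hz2}, which already does the heavy lifting for an \emph{arbitrary} orthonormal basis (or more precisely, for the orthogonal projection $\widehat{\vb}$). The only gap to fill is that here the hypothesis is phrased in terms of $V^T\vb_1 = V^T\vb_2$ for a \emph{general} (not necessarily orthonormal) basis $V$ of $\cN$, whereas Lemma~\ref{lem:hz1=hz2} is stated in terms of equality of the projections $\widehat{\vb}_1 = \widehat{\vb}_2$. So the first step is to observe that these two conditions are equivalent: from the explicit formula $\widehat{\vb}_i = V(V^TV)^{-1}V^T\vb_i$, if $V^T\vb_1 = V^T\vb_2$ then clearly $\widehat{\vb}_1 = \widehat{\vb}_2$; conversely, left-multiplying $\widehat{\vb}_1 = \widehat{\vb}_2$ by $V^T$ and using that $V^TV$ is invertible (since the columns of $V$ are independent) recovers $V^T\vb_1 = V^T\vb_2$. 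This equivalence holds for any basis $V$ of $\cN$, which is exactly the refinement being claimed.

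With that in hand, the argument is short. Assume $\|\vb_1\|_2 = \|\vb_2\|_2$ and $V^T\vb_1 = V^T\vb_2$. By the equivalence just noted, $\widehat{\vb}_1 = \widehat{\vb}_2$. Now apply Lemma~\ref{lem:hz1=hz2} directly with $\vz_1 = \vb_1$, $\vz_2 = \vb_2$: its hypotheses ($\|\vz_1\|_2 = \|\vz_2\|_2$ and $\widehat{\vz}_1 = \widehat{\vz}_2$) are met, so we conclude $\vb_1 \simS \vb_2$ and $\bmu(\vb_1) = \bmu(\vb_2)$. That is precisely the assertion.

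There is essentially no obstacle here — this lemma is a cosmetic restatement of Lemma~\ref{lem:hz1=hz2} that trades the coordinate-free projection $\widehat{\vb}$ for the coordinates $V^T\vb$ relative to a chosen basis, mirroring how Lemma~\ref{lem:cz1=cz2} refines Lemma~\ref{lem:z1sz2}. The one point worth stating carefully is the invertibility of $V^TV$, which is what makes $V^T\vb_1 = V^T\vb_2$ and $\widehat{\vb}_1 = \widehat{\vb}_2$ genuinely equivalent rather than one merely implying the other; this is immediate because the columns of $V$, being a basis, are linearly independent and hence $V$ has full column rank. If one wanted to avoid even invoking the explicit projection formula, an alternative is to note that $\vb_1 - \vb_2 \perp \cN$ iff $V^T(\vb_1 - \vb_2) = 0$ iff $\widehat{\vb_1 - \vb_2} = 0$, i.e. $\widehat{\vb}_1 = \widehat{\vb}_2$, then proceed as before.
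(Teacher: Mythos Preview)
Your proposal is correct and follows exactly the paper's approach: use the projection formula $\widehat{\vb}_k = V(V^TV)^{-1}V^T\vb_k$ to deduce $\widehat{\vb}_1 = \widehat{\vb}_2$ from $V^T\vb_1 = V^T\vb_2$, then invoke Lemma~\ref{lem:hz1=hz2}. The paper's proof is terser (it only uses the one implication actually needed), but the substance is identical.
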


\begin{proof}
$\widehat{\vz}_k=V(V^TV)^{-1}V^T\vz_k$, $k=1,2$.
Hence if $V^T\vz_1 = V^T\vz_2$, then
$\widehat{\vz}_1=\widehat{\vz}_2$. The result then
follows by Lemma \ref{lem:hz1=hz2}.
\end{proof}


Putting the above observations together, gives the following result.

\begin{theorem} \label{thm:mubarprojection}

There exists a function $g\colon \cN \rightarrow \R$
such that for unit norm $\vb\in \R^{\dd}$,
$\bmu(\vb) = g(\widehat{\vb})$.
\end{theorem}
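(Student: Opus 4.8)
The plan is to define $g$ explicitly and verify it is well-defined using the lemmas already established. Given $\widehat{\vb} \in \cN$ with the side condition that $\vb$ has unit norm, I would set
\[
g(\vp) = \bmu(\vp + \vp^0),
\]
where $\vp^0$ is \emph{any} vector orthogonal to $\cN$ satisfying $\|\vp^0\|_2^2 = 1 - \|\vp\|_2^2$ (such a residual exists since $\|\vp\|_2 = \|\widehat{\vb}\|_2 \leq \|\vb\|_2 = 1$, using $\dd > \nh$ so the orthogonal complement of $\cN$ is nontrivial whenever a residual is needed). The key point is that this definition does not depend on the particular choice of $\vp^0$: if $\vp^0$ and $(\vp^0)'$ are two such residuals, then $\vz_1 = \vp + \vp^0$ and $\vz_2 = \vp + (\vp^0)'$ have the same norm ($=1$) and the same orthogonal projection onto $\cN$ (namely $\vp$), so Lemma~\ref{lem:hz1=hz2} gives $\bmu(\vz_1) = \bmu(\vz_2)$. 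Hence $g$ is a well-defined function on $\cN$ (or at least on the relevant subset $\{\vp \in \cN : \|\vp\|_2 \leq 1\}$, which is all we need, and one can extend arbitrarily elsewhere).

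Next I would check the claimed identity. For a unit norm $\vb$, decompose $\vb = \widehat{\vb} + \vb^0$ with $\vb^0 \perp \cN$ and $\|\vb^0\|_2^2 = 1 - \|\widehat{\vb}\|_2^2$. Then $\vb^0$ is one admissible choice of residual for $\widehat{\vb}$, so by the definition of $g$ and its well-definedness, $g(\widehat{\vb}) = \bmu(\widehat{\vb} + \vb^0) = \bmu(\vb)$. That is the desired conclusion.

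Alternatively, and perhaps more cleanly for the write-up, I would invoke Lemma~\ref{lem:hz1=hz2} directly in its "there exists a well-defined map" form: that lemma says $\bmu(\vb)$ depends only on the pair $(\|\vb\|_2, \widehat{\vb})$, and fixing $\|\vb\|_2 = 1$ collapses this to dependence on $\widehat{\vb}$ alone. Defining $g$ to be that induced map finishes the proof in one line. I would present both the explicit construction (to make $g$ concrete) and this abstract phrasing.

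I do not expect a serious obstacle here; the substantive work has already been done in Lemma~\ref{lem:hz1=hz2}. The only subtlety to be careful about is the existence of the orthogonal residual $\vp^0$ of the prescribed norm, which requires that $\cN$ is a proper subspace of $\R^{\dd}$ (equivalently $\nh < \dd$) — and even this is only needed when $\|\vp\|_2 < 1$ strictly; when $\|\vp\|_2 = 1$ one takes $\vp^0 = \vzero$. Since the paper's motivating regime has $\nh$ at most a handful and $\dd$ in the hundreds or more, this is automatic, but it is worth a one-sentence remark. The remaining steps — the norm and projection bookkeeping in the decomposition $\vb = \widehat{\vb} + \vb^0$ — are routine.
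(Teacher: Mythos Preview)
Your proposal is correct and follows essentially the same approach as the paper: both hinge on Lemma~\ref{lem:hz1=hz2} to show that, once the norm is fixed to $1$, the value $\bmu(\vb)$ is determined by $\widehat{\vb}$ alone. The paper phrases the induced map as the composition $g = f_{\bmu}\circ f_{[1]}$, factoring through the equivalence class $[\vb]$ (with $f_{[1]}$ well-defined by Lemma~\ref{lem:hz1=hz2} and $f_{\bmu}$ by Proposition~\ref{pro:invmu}), whereas you construct a representative $\vp + \vp^0$ explicitly and then invoke the $\bmu$-equality part of Lemma~\ref{lem:hz1=hz2} directly; since that part of Lemma~\ref{lem:hz1=hz2} already absorbs Proposition~\ref{pro:invmu}, the two arguments coincide in substance.
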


\begin{proof}
Given $\widehat{\vb}$ and the fact that $\vb$ has unit norm,
by Lemma \ref{lem:hz1=hz2} we have $[\vb]=f_{[1]}(\widehat{\vb})$. Then we use Proposition \ref{pro:invmu}
to determine $\bmu(\vb)=f_{\bmu}([\vb])$.
Thus the desired function is
the composition $g=f_{\bmu}\cdot f_{[1]}$.
\end{proof}

The following Corollary is a natural consequence of Theorem \ref{thm:mubarprojection}.

\begin{corollary}
Let the columns of $V=[\vv_1, \vv_2, \ldots, \vv_m]$ form a basis for $\mathcal{N}$.
Then there exists a function $h\colon\R^{\nh} \rightarrow \R$
such that for unit norm $\vb\in \R^{\dd}$,
\begin{equation}
\bar{\mu}(\vb) = h(\vv_1^T\vb, \vv_2^T\vb, \ldots, \vv_m^T\vb).
\end{equation}
\end{corollary}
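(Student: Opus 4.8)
The plan is to derive the Corollary directly from Theorem~\ref{thm:mubarprojection} by composing $g$ with a linear reparametrization of $\cN$. Concretely, fix the basis $V=[\vv_1,\dots,\vv_{\nh}]$ for $\cN$. Every point $\widehat{\vb}\in\cN$ has a unique coordinate vector $\vect{a}\in\R^{\nh}$ with $\widehat{\vb}=V\vect{a}$, and since $\widehat{\vb}=V(V^TV)^{-1}V^T\vb$ we have $\vect{a}=(V^TV)^{-1}V^T\vb$. The key observation is that the tuple $(\vv_1^T\vb,\dots,\vv_{\nh}^T\vb)=V^T\vb$ is an invertible linear image of $\vect{a}$, namely $V^T\vb=(V^TV)\vect{a}$, with $V^TV$ invertible because $V$ has full column rank.

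First I would define the map $L\colon\cN\to\R^{\nh}$ by $L(\widehat{\vb})=V^T\vb$ for any $\vb$ projecting to $\widehat{\vb}$; this is well defined since $V^T\vb=V^T\widehat{\vb}$ depends only on $\widehat{\vb}$, and it is a linear bijection onto $\R^{\nh}$ by the previous paragraph, with inverse $\vect{s}\mapsto V(V^TV)^{-1}\vect{s}$. Then I would set $h=g\circ L^{-1}\colon\R^{\nh}\to\R$. For unit-norm $\vb$, Theorem~\ref{thm:mubarprojection} gives $\bmu(\vb)=g(\widehat{\vb})=g\bigl(L^{-1}(V^T\vb)\bigr)=h(V^T\vb)=h(\vv_1^T\vb,\dots,\vv_{\nh}^T\vb)$, which is exactly the claimed identity.

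Alternatively, and perhaps more cleanly for the write-up, I would bypass $g$ and invoke Lemma~\ref{lem:VTz1=VTz2} directly: for unit-norm $\vb$, that lemma says the value of $V^T\vb$ together with $\|\vb\|_2=1$ determines the equivalence class $[\vb]$, hence (by Proposition~\ref{pro:invmu}) determines $\bmu(\vb)$. So the assignment $V^T\vb\mapsto\bmu(\vb)$ is a well-defined function on the image $\{V^T\vb\colon\|\vb\|_2=1\}$, and since every $\vect{s}\in\R^{\nh}$ is of the form $V^T\vb$ for some unit-norm $\vb$ (take $\vb=V(V^TV)^{-1}\vect{s}$ if $\|V(V^TV)^{-1}\vect{s}\|_2\le 1$ and pad with an orthogonal component to reach unit norm — using $\dd>\nh$), this function extends to all of $\R^{\nh}$, giving the desired $h$.

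I do not expect a genuine obstacle here; this is essentially a bookkeeping corollary. The one point requiring a sentence of care is the surjectivity/extension claim in the last argument — that $V^T\vb$ can realize any vector in $\R^{\nh}$ as $\vb$ ranges over the unit sphere — which relies on $\dd>\nh$ so that there is room for an orthogonal residual; in the intended application $\dd\gg\nh$, so this is harmless. The composition-based first argument avoids even this by defining $h$ on all of $\R^{\nh}$ from the start via the linear bijection $L$.
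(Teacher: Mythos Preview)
Your proposal is correct and your first argument is essentially identical to the paper's: the paper defines $h = g \circ V(V^TV)^{-1}$, which is exactly your $h = g \circ L^{-1}$ with $L^{-1}(\vect{s}) = V(V^TV)^{-1}\vect{s}$. Your added remarks on well-definedness and bijectivity of $L$ are a bit more explicit than the paper's one-line proof, but the route is the same.
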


\begin{proof}
$\widehat{\vb}=V(V^TV)^{-1}(V^T\vb)$. Thus from
\begin{equation*}
V^T\vb=(\vv_1^T\vb,\dots,\vv_{\nh}^T\vb)^T,
\end{equation*}
we can compute $\widehat{\vb}$.
Then using the fact that $\vb$ is unit norm and Theorem
\ref{thm:mubarprojection}, the desired function is the composition $h(V^T\vb)= (g \cdot V(V^TV)^{-1})(V^T\vb)$.
\end{proof}
\section{The Simplified Problem}
Having shown that Problem \eqref{eq:P0} and \eqref{eq:P1} depends only on the projection of $\vb$, now we present the simplified problem.

Let the columns of $V=[\vv_1, \ldots, \vv_m]$ form an ON basis for $\cN$, and the columns of $U=[\vu_1, \ldots, \vu_{n-m}]$ form an ON basis for $\cN^\perp$. De-compose $\vz$ and $\vb$ as the sum of its projection and the corresponding orthogonal residual, i.e., $\vz=\widehat{\vz}+\vz^0$, and $\vb=\widehat{\vb}+\vb^0$. We have $t_\vz=V^T\vz$, $t_\vz^0=U^T\vz$, $t_\vb=V^T\vb$, and $t_\vb^0=U^T\vb$. Problem \eqref{eq:P1} is then equivalent to the following,
\begin{align} \label{eq:sP1}
\begin{split}
\max_{t_\vz\in \R^{\nh}, t_\vz^0 \in \R^{\dd-\nh} }
& \quad t_\vz^Tt_\vb+ (t_\vz^0)^Tt_\vb^0\\
\text{s.t.}
&\quad  t_\vz^Tt_\vz +(t_\vz^0)^Tt_\vz^0 \leq 1 \\
&\quad  N^TVt_\vz+\vpsi \leq \vzero.
\end{split}
\end{align}

Let $A=N^TV\in\R^{m\times m}$, and define function $k(\vx)=\sqrt{1-\norm{\vx}{2}^2}$. Then it follows trivially, the above problem is equivalent to,
\begin{align} \label{eq:ssP1}
\begin{split}
\max_{t_\vz\in \R^{\nh}}
& \quad t_\vz^Tt_\vb+ k(t_\vz)k(t_\vb)\\
\text{s.t.}
&\quad  At_\vz+\vpsi \leq \vzero.
\end{split}
\end{align}
This is a $m$-dimension quadratic optimization problem with $m$ linear constraints. If we further denote $t_\vz^e=(t_\vz, k(t_\vz))^T$, $t_\vb^e=(t_\vb, k(t_\vb))^T$, under the assumption $\vb \notin \cN$, with KKT condition we get another equivalent problem in dimension $\nh+1$,
\begin{align} \label{eq:ssP1}
\begin{split}
\max_{t_\vz^e\in \R^{\nh+1}}
& \quad (t_\vz^e)^Tt_\vb^e\\
\text{s.t.}
&\quad  (t_\vz^e)^Tt_\vz^e \leq 1 \\
&\quad  [A \:\vzero] t_\vz^e+\vpsi \leq \vzero.
\end{split}
\end{align}

\section{Conclusion}
In this paper, we studied a simple optimization problem that is the key in lasso screening. The $\dd$-dimension optimization problem has a linear objective function with a feasible set that is the intersection of a spherical region and $\nh$ half spaces. Incorporating more half spaces gives hope of stronger screening performance, but in the meantime may increase the computational cost of screening. Analysis in our paper demonstrates that the optimization problem is a function of the projection of a feature onto a subspace spanned by the normals of the $\nh$ half spaces. This result reduces the dimension of the problem from $n$ (dimension of data points) to $m$ (the number of half spaces), which is a reduction of several orders of magnitude. The simplified problem is of same form to the original problem: a linear objective with linear and quadratic constraints, which implies that the dimensionality reduction can lead to the reduction in computational cost. This sheds light on improving the effectiveness of screening by using a tighter region bound while at the same time keeping its computational cost at bay.


\bibliographystyle{IEEEbib}

{\small\bibliography{Lassobib,books,Screening2012,musicbib}}

\begin{thebibliography}{10}

\bibitem{RobTib1996}
R.~Tibshirani,
\newblock ``Regression shrinkage and selection via the lasso,''
\newblock {\em J. Royal. Statist. Soc B.}, vol. 58, no. 1, pp. 267--288, 1996.

\bibitem{Wright2009Robust}
J.~Wright, A.~Y. Yang, A.~Ganesh, S.~S. Sastry, and Y.~Ma,
\newblock ``{Robust face recognition via sparse representation},''
\newblock {\em IEEE Trans, on Pattern Analysis and Machine Intelligence}, vol.
  31, no. 2, pp. 210--227, 2009.

\bibitem{YWang2013SRC}
Yun Wang, Xu~Chen, and P.~J. Ramadge,
\newblock ``Sparse representation classification via sequential lasso
  screening,''
\newblock in {\em IEEE Global Conference on Signal and Information Processing},
  2013.

\bibitem{DeepSparseCoding}
Y.~He, K.~Kavukcuoglu, Y.~Wang, A.~Szlam, and Y.~Qi,
\newblock ``Unsupervised feature learning by deep sparse coding,''
\newblock in {\em SIAM International Conference on Data Mining}. SIAM, 2014,
  pp. 902--910.

\bibitem{Lee2007Efficient}
H.~Lee, A.~Battle, R.~Raina, and A.Y. Ng,
\newblock ``{Efficient sparse coding algorithms},''
\newblock in {\em Advances in Neural Information Processing Systems}, 2007,
  vol.~19, p. 801.

\bibitem{Tibshirani2010Strong}
R.~Tibshirani, J.~Bien, J.~Friedman, T.~Hastie, N.~Simon, J.~Taylor, and R.~J.
  Tibshirani,
\newblock ``{Strong rules for discarding predictors in lasso-type problems},''
\newblock {\em Arxiv preprint arXiv:1011.2234}, 2010.

\bibitem{Ghaoui2012}
L.~El~Ghaoui, V.~Viallon, and T.~Rabbani,
\newblock ``Safe feature elimination in sparse supervised learning,''
\newblock {\em Pacific Journal of Optimization}, vol. 8, no. 4, pp. 667--698,
  2012.

\bibitem{Xiang2011Learning_b}
Z.~J. Xiang, H.~Xu, and P.~J. Ramadge,
\newblock ``{Learning sparse representations of high dimensional data on large
  scale dictionaries},''
\newblock in {\em Advances in Neural Information Processing Systems}, 2011.

\bibitem{Xiang2012Fast}
Z.~J. Xiang and P.~J. Ramadge,
\newblock ``{Fast lasso screening tests based on correlations},''
\newblock in {\em IEEE Int. Conf. on Acoustics, Speech and Signal Processing},
  2012.

\bibitem{YWang2013a}
Y.~Wang, Z.~J. Xiang, and P.~J. Ramadge,
\newblock ``Tradeoffs in improved screening of lasso problems,''
\newblock in {\em IEEE Int. Conf. on Acoustics, Speech and Signal Processing},
  2013.

\bibitem{FCSS2016}
Yun Wang, Xu~Chen, and P.~J. Ramadge,
\newblock ``Feedback-controlled sequential lasso screening,''
\newblock {\em arXiv:1608.06010 [cs.LG]}, August 2016.

\bibitem{WangThesis2015}
Y.~Wang,
\newblock {\em Feature Screening for the Lasso},
\newblock Ph.D. thesis, Princeton University, Sept. 2015.

\bibitem{LSS}
Z.~J. Xiang, Yun Wang, and P.~J. Ramadge,
\newblock ``Screening tests for lasso problems,''
\newblock {\em IEEE Trans. Pattern Analysis and Machine Intelligence}, vol.
  DOI: 10.1109/TPAMI.2016.2568185., 12 May 2016.

\bibitem{GramfortICML15}
O.~Fercoq, A.~Gramfort, and J.~Salmon,
\newblock ``Mind the duality gap: safer rules for the lasso,''
\newblock in {\em International Conference on Machine Learning}, 2015.

\bibitem{LeCun1998The-MNIST}
Y.~LeCun and C.~Cortes,
\newblock ``{The MNIST database of handwritten digits},'' 1998.

\bibitem{Georghiades2002From}
A.~S. Georghiades, P.~N. Belhumeur, and D.~J. Kriegman,
\newblock ``{From few to many: Illumination cone models for face recognition
  under variable lighting and pose},''
\newblock {\em IEEE Transactions on Pattern Analysis and Machine Intelligence},
  vol. 23, no. 6, pp. 643--660, 2002.

\bibitem{YWang2013b}
Y.~Wang, Z.~J. Xiang, and P.~J. Ramadge,
\newblock ``Lasso screening with a small regularization parameter,''
\newblock in {\em IEEE Int. Conf. on Acoustics, Speech and Signal Processing},
  2013.

\bibitem{Asuncion2007UCI-Machine}
A.~Asuncion and D.J. Newman,
\newblock ``{UCI Machine Learning Repository},'' URL:
  http://archive.ics.uci.edu/ml/, 2007.

\end{thebibliography}

\end{document}